\newcommand{\eg}{\emph{e.g.}}
\newcommand{\etal}{\emph{et~al.}}
\newcommand{\ie}{\emph{i.e.}}
\newtheorem{myTheo}{Theorem}
\newenvironment{breakablealgorithm}
{
	\begin{center}
		\refstepcounter{algorithm}
		\hrule height.8pt depth0pt \kern2pt
		\renewcommand{\caption}[2][\relax]{
			{\raggedright\textbf{\ALG@name~\thealgorithm} ##2\par}%
			\ifx\relax##1\relax 
			\addcontentsline{loa}{algorithm}{\protect\numberline{\thealgorithm}##2}%
			\else 
			\addcontentsline{loa}{algorithm}{\protect\numberline{\thealgorithm}##1}%
			\fi
			\kern2pt\hrule\kern2pt
		}
	}{
		\kern2pt\hrule\relax
	\end{center}
}
\title{Vision GNN: An Image is Worth Graph of Nodes}
\author{%
  Kai Han$^{1,2*}$\quad Yunhe Wang$^{2}$\thanks{Equal contribution.}~\quad Jianyuan Guo$^{2}$\quad Yehui Tang$^{2,3}$\quad Enhua Wu$^{1,4}$\\
  $^1$State Key Lab of Computer Science, ISCAS \& UCAS\\
  $^2$Huawei Noah's Ark Lab\\
  $^3$Peking University\hspace{0.2cm} $^4$University of Macau\\
  \texttt{\{kai.han,yunhe.wang\}@huawei.com, weh@ios.ac.cn} \\
}
\begin{document}

\maketitle

\begin{abstract}
  Network architecture plays a key role in the deep learning-based computer vision system. The widely-used convolutional neural network and transformer treat the image as a grid or sequence structure, which is not flexible to capture irregular and complex objects. In this paper, we propose to represent the image as a graph structure and introduce a new \emph{Vision GNN} (ViG) architecture to extract graph-level feature for visual tasks. We first split the image to a number of patches which are viewed as nodes, and construct a graph by connecting the nearest neighbors. Based on the graph representation of images, we build our ViG model to transform and exchange information among all the nodes. ViG consists of two basic modules: Grapher module with graph convolution for aggregating and updating graph information, and FFN module with two linear layers for node feature transformation. Both isotropic and pyramid architectures of ViG are built with different model sizes. Extensive experiments on image recognition and object detection tasks demonstrate the superiority of our ViG architecture. We hope this pioneering study of GNN on general visual tasks will provide useful inspiration and experience for future research.
  
  The PyTorch code is available at \url{https://github.com/huawei-noah/Efficient-AI-Backbones} and the MindSpore code is available at \url{https://gitee.com/mindspore/models}.
\end{abstract}

\section{Introduction}
\label{sec:intro}
In the modern computer vision system, convolutional neural networks (CNNs) used to be the de-facto standard network architecture~\cite{lecun1998gradient,alexnet,resnet}. Recently, transformer with attention mechanism was introduced for visual tasks~\cite{vit,detr} and attained competitive performance. MLP-based (multi-layer perceptron) vision models~\cite{mixer,resmlp} can also work well without using convolutions or self-attention. These progresses are pushing the vision models towards an unprecedented height.

Different networks treat the input image in different ways. As shown in Figure~\ref{Fig:image}, the image data is usually represented as a regular grid of pixels in the Euclidean space. CNNs~\cite{lecun1998gradient} apply sliding window on the image and introduce the shift-invariance and locality. The recent vision transformer~\cite{vit} or MLP~\cite{mixer} treats the image as a sequence of patches. For example, ViT~\cite{vit} divides a $224\times 224$ image into a number of $16\times 16$ patches and forms a sequence with length of $196$ as input.

Instead of the regular grid or sequence representation, we process the image in a more flexible way. One basic task of computer vision is to recognize the objects in an image. Since the objects are usually not quadrate whose shape is irregular, the commonly-used grid or sequence structures in previous networks like ResNet and ViT are redundant and inflexible to process them. An object can be viewed as a composition of parts, \eg, a human can be roughly divided into head, upper body, arms and legs. These parts linked by joints naturally form a graph structure. By analyzing the graph, we are able to recognize the human. Moreover, graph is a generalized data structure that grid and sequence can be viewed as a special case of graph. Viewing an image as a graph is more flexible and effective for visual perception.

Based on the graph representation of images, we build the vision graph neural network (ViG for short) for visual tasks. Instead of treating each pixel as a node which will result in too many nodes (>10K), we divide the input image to a number of patches and view each patch as a node. After constructing the graph of image patches, we use our ViG model to transform and exchange information among all the nodes. The basic cells of ViG include two parts: Grapher and FFN (feed-forward network) modules. Grapher module is constructed based on graph convolution for graph information processing. To alleviate over-smoothing phenomenon of conventional GNN, a FFN module is utilized for node feature transformation and encouraging node diversity. With Grapher and FFN modules, we build our ViG models in both isotropic and pyramid manners. In the experiments, we demonstrate the effectiveness of ViG model on visual tasks like image classification and object detection. For instance, our Pyramid ViG-S achieves 82.1\% top-1 accuracy on ImageNet classification task, which outperforms the representative CNN (ResNet~\cite{resnet}), MLP (CycleMLP~\cite{cyclemlp}) and transformer (Swin-T~\cite{swin}) with similar FLOPs (about 4.5G). To the best of our knowledge, our work is the first to successfully apply graph neural network on large-scale visual tasks. We hope our work will inspire the community to further explore more powerful network architectures.

\begin{figure}[tp]
	\centering
	\small
	\setlength{\tabcolsep}{10pt}{
		\begin{tabular}{ccc}
			\makecell*[c]{\includegraphics[width=0.2\linewidth]{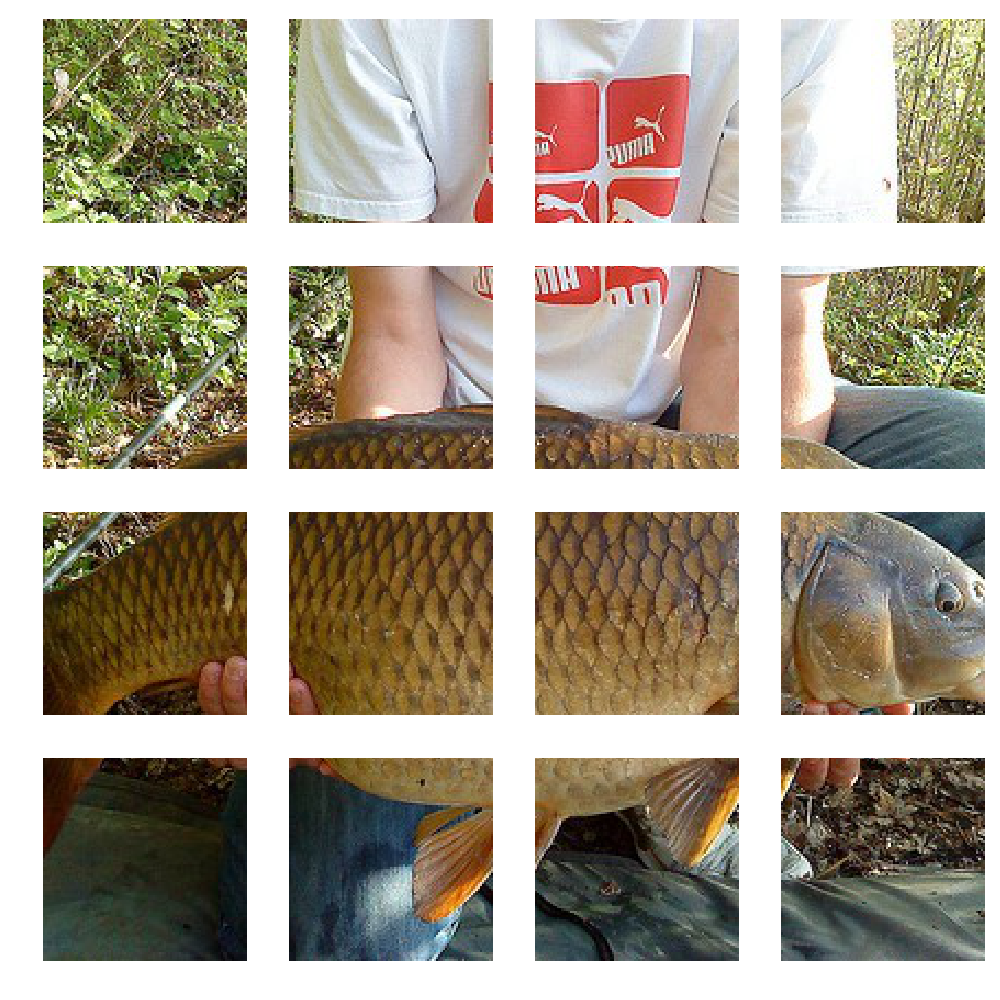}}  &
			\makecell*[c]{\includegraphics[width=0.3\linewidth]{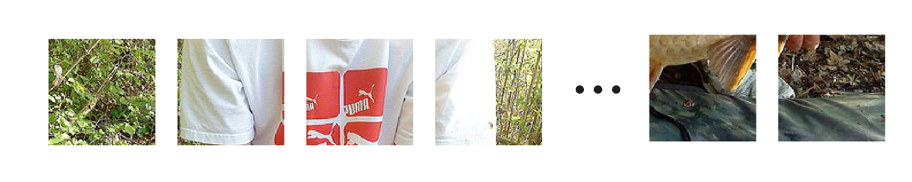}}  & \makecell*[c]{\includegraphics[width=0.35\linewidth]{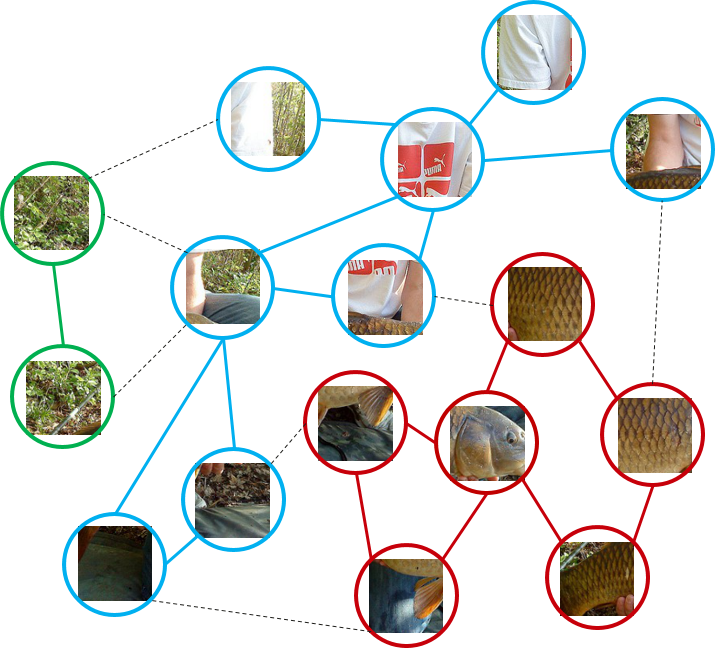}}
			\\
			(a) Grid structure. & (b) Sequence structure. & (b) Graph structure.
		\end{tabular}
	}
	\caption{Illustration of the grid, sequence and graph representation of the image. In the grid structure, the pixels or patches are ordered only by the spatial position. In the sequence structure, the 2D image is transformed to a sequence of patches. In the graph structure, the nodes are linked by its content and are not constrained by the local position.}
	\label{Fig:image}	
	\vspace{-1.0em}
\end{figure}

\section{Related Work}
\label{sec:related}

In this section, we first revisit the backbone networks in computer vision. Then we review the development of graph neural network, especially GCN and its applications on visual tasks.

\subsection{CNN, Transformer and MLP for Vision}
The mainstream network architecture in computer vision used to be convolutional network~\cite{lecun1998gradient,alexnet,resnet}. Starting from LeNet~\cite{lecun1998gradient}, CNNs have been successfully used in various visual tasks, \eg, image classification~\cite{alexnet}, object detection~\cite{fasterRCNN} and semantic segmentation~\cite{fcn}. The CNN architecture is evolving rapidly in the last ten years. The representative works include ResNet~\cite{resnet}, MobileNet~\cite{mobilenet} and NAS~\cite{nas1,yang2020cars}.
Vision transformer was introduced for visual tasks from 2020~\cite{vit-survey,vit,detr,ipt}. From then on, a number of variants of ViT~\cite{vit} were proposed to improve the performance on visual tasks. The main improvements include pyramid architecture~\cite{pvt,swin}, local attention~\cite{tnt,swin} and position encoding~\cite{wu2021rethinking}.
Inspired by vision transformer, MLP is also explored in computer vision~\cite{mixer,resmlp}. With specially designed modules~\cite{cyclemlp,asmlp,hire,wavemlp}, MLP can achieve competitive performance and work on general visual tasks like object detection and segmentation.

\subsection{Graph Neural Network}
The earliest graph neural network was initially outlined in~\cite{gori2005new,scarselli2008graph}. Micheli~\cite{micheli2009neural} proposed the early form of spatial-based graph convolutional network by architecturally composite nonrecursive layers. In recent several years, the variants of spatial-based GCNs have been introduced, such as~\cite{niepert2016learning,atwood2016diffusion,gilmer2017neural}. Spectral-based GCN was first presented by Bruna~\etal~\cite{bruna2013spectral} that introduced graph convolution based on the spectral graph theory. Since this time, a number of works to improve and extend spectral-based GCN have been proposed~\cite{henaff2015deep,defferrard2016convolutional,kipf2016semi}. The GCNs are usually applied on graph data, such as social networks~\cite{sage}, citation
networks~\cite{sen2008collective} and biochemical graphs~\cite{wale2008comparison}.

The applications of GCN in the field of computer vision~\cite{xu2017scene,landrieu2018large,wang2019learning,jing2022learning} mainly include point clouds classification, scene graph generation, and action recognition. A point cloud is a set of 3D points in space which is usually collected by LiDAR scans. GCN has been explored for classifying and segmenting points clouds~\cite{landrieu2018large,edgeconv,yang2020distilling}.
Scene graph generation aims to parse the input image intro a graph with the objects and their relationship, which is usually solved by combining object detector and GCN~\cite{xu2017scene,yang2018graph}.
By processing the naturally formed graph of linked human joints, GCN was utilized on human action recognition task~\cite{jain2016structural,yan2018spatial}.
GCN can only tackle specific visual tasks with naturally constructed graph. For general applications in computer vision, we need a GCN-based backbone network that directly processes the image data.

\section{Approach}
\label{sec:approach}
In this section, we describe how to transform an image to a graph and introduce vision GNN architectures to learn visual representation.

\begin{figure}[htp] 
	\centering
	\includegraphics[width=1.0\linewidth]{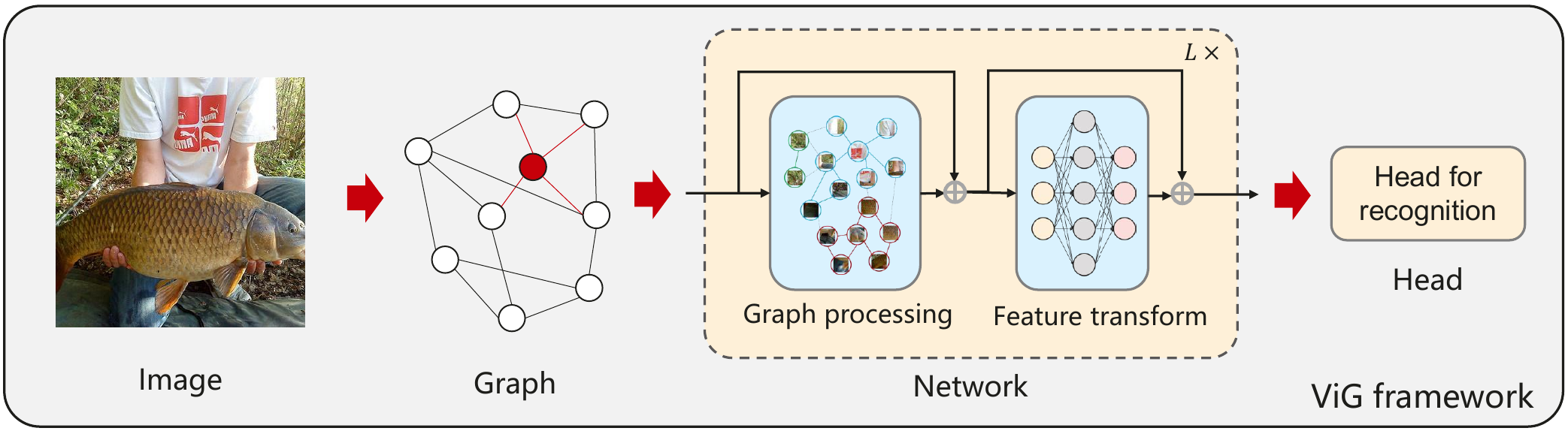}	
	\vspace{-1em}
	\caption{The framework of the proposed ViG model.}
	\label{Fig:vig}
	\vspace{-0.5em}
\end{figure}

\subsection{ViG Block}
\paragraph{Graph Structure of Image.}
For an image with size of $H\times W\times 3$, we divided it into $N$ patches. By transforming each patch into a feature vector $\mathbf{x}_i\in\mathbb{R}^{D}$, we have $X=[\mathbf{x}_1,\mathbf{x}_2,\cdots,\mathbf{x}_N]$ where $D$ is the feature dimension and $i=1,2,\cdots,N$. These features can be viewed as a set of unordered nodes which are denoted as $\mathcal{V}=\{v_1,v_2,\cdots,v_N\}$. For each node $v_i$, we find its $K$ nearest neighbors $\mathcal{N}(v_i)$ and add an edge $e_{ji}$ directed from $v_j$ to $v_i$ for all $v_j\in\mathcal{N}(v_i)$. Then we obtain a graph $\mathcal{G}=(\mathcal{V},\mathcal{E})$ where $\mathcal{E}$ denote all the edges. We denote the graph construction process as $\mathcal{G}=G(X)$ in the following. By viewing the image as a graph data, we explore how to utilize GNN to extract its representation.

The advantages of graph representation of the image include: 1) graph is a generalized data structure that grid and sequence can be viewed as a special case of graph; 2) graph is more flexible than grid or sequence to model the complex object as an object in the image is usually not quadrate whose shape is irregular; 3) an object can be viewed as a composition of parts (\eg, a human can be roughly divided into head, upper body, arms and legs), and graph structure can construct the connections among those parts; 4) the advanced research on GNN can be transferred to address visual tasks.

\paragraph{Graph-level processing.}
To be general, we start from the features $X\in\mathbb{R}^{N\times D}$. We first construct a graph based on the features: $\mathcal{G}=G(X)$. A graph convolutional layer can exchange information between nodes by aggregating features from its neighbor nodes. Specifically, graph convolution operates as follows:
\begin{equation}
	\begin{aligned}
		\mathcal{G}' &= {F}(\mathcal{G}, \mathcal{W})\\
		&= Update(Aggregate(\mathcal{G}, W_{agg}), W_{update}),
	\end{aligned}
\end{equation}
where $W_{agg}$ and $W_{update}$ are the learnable weights of the aggregation and update operations, respectively. More concretely, the aggregation operation computes the representation of a node by aggregating features of neighbor nodes, and the update operation further merge the aggregated feature:
\begin{equation}
	\mathbf{x}'_i = h(\mathbf{x}_i, g(\mathbf{x}_i, \mathcal{N}(\mathbf{x}_i), W_{agg}), W_{update}),
\end{equation}
where $\mathcal{N}(\mathbf{x}_i^{l})$ is the set of neighbor nodes of $\mathbf{x}_i^{l}$. Here we adopt max-relative graph convolution~\cite{deepgcn} for its simplicity and efficiency:
\begin{align}
	g(\cdot) &= \mathbf{x}''_i = [\mathbf{x}_i,\max(\{\mathbf{x}_j-\mathbf{x}_i|j\in\mathcal{N}(\mathbf{x}_i)\}],\\
	h(\cdot) &= \mathbf{x}'_i = \mathbf{x}''_iW_{update},
\end{align}
where the bias term is omitted. The above graph-level processing can be denoted as $X'=\text{GraphConv}(X)$. 

We further introduce multi-head update operation of graph convolution. The aggregated feature $\mathbf{x}''_i$ is first split into $h$ heads, \ie, $\textit{head}^1, \textit{head}^2, \cdots, \textit{head}^h$ and then these heads are updated with different weights respectively. All the heads can be updated in parallel and are concatenated as the final values:
\begin{equation}\label{eq:multihead}
	\mathbf{x}'_i = [\textit{head}^1W_{update}^1, \textit{head}^2W_{update}^2,\cdots, \textit{head}^hW_{update}^h].
\end{equation}
Multi-head update operation allows the model to update information in multiple representation subspaces, which is beneficial to the feature diversity.

\begin{wrapfigure}{r}{0.4\textwidth}
	\vspace{-1.5em}
	\begin{center}
		\includegraphics[width=0.4\textwidth]{./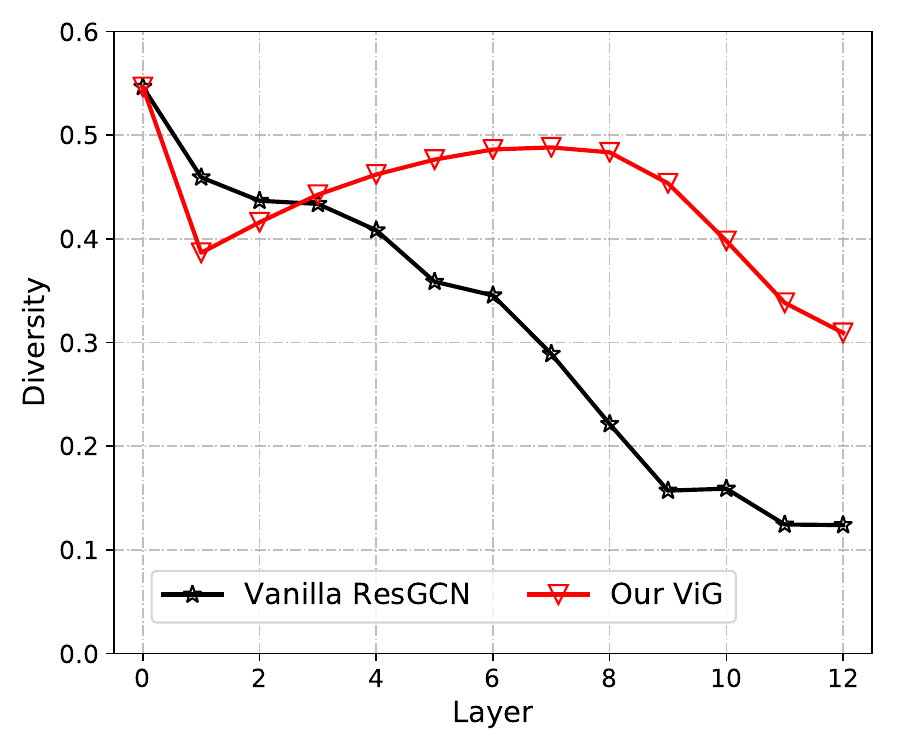}
	\end{center}
	\vspace{-1.em}
	\caption{Feature diversity of nodes as layer changes.}\label{Fig:diversity}
	\vspace{-1.em}
\end{wrapfigure}

\paragraph{ViG block.}
The previous GCNs usually repeatedly use several graph convolution layers to extract aggregated feature of the graph data. The over-smoothing phenomenon in deep GCNs~\cite{li2018deeper,oono2019graph} will decrease the distinctiveness of node features and lead to performance degradation for visual recognition, as shown in Figure~\ref{Fig:diversity} where diversity is measured as $\|X-\mathbf{1}\tilde{\mathbf{x}}^T\|$ with $\tilde{\mathbf{x}}=\arg\min_{\tilde{\mathbf{x}}}\|X-\mathbf{1}\tilde{\mathbf{x}}^T\|$~\cite{dong2021attention}. To alleviate this issue, we introduce more feature transformations and nonlinear activations in our ViG block.

We apply a linear layer before and after the graph convolution to project the node features into the same domain and increase the feature diversity. A nonlinear activation function is inserted after graph convolution to avoid layer collapse. We call the upgraded module as Grapher module. In practice, given the input feature $X\in\mathbb{R}^{N\times D}$, the Grapher module can be expressed as
\begin{equation}\label{eq:gcn}
	Y = \sigma(\text{GraphConv}(XW_{in}))W_{out} + X,
\end{equation}
where $Y\in\mathbb{R}^{N\times D}$, $W_{in}$ and $W_{out}$ are the weights of fully-connected layers, $\sigma$ is the activation function, \eg, ReLU and GeLU~\cite{gelu}, and the bias term is omitted. 

To further encourage the feature transformation capacity and relief the over-smoothing phenomenon, we utilize feed-forward network (FFN) on each node. The FFN module is a simple multi-layer perceptron with two fully-connected layers:
\begin{equation}\label{eq:ffn}
	Z = \sigma(YW_{1})W_{2} + Y,
\end{equation}
where $Z\in\mathbb{R}^{N\times D}$, $W_{1}$ and $W_{2}$ are the weights of fully-connected layers, and the bias term is omitted. The hidden dimension of FFN is usually larger than $D$. In both Grapher and FFN modules, batch normalization is applied after every fully-connected layer or graph convolution layer, which is omitted in Eq.~\ref{eq:gcn} and \ref{eq:ffn} for concision.
A stack of Grapher module and FFN module constitutes the ViG block which serves as the basic building unit for constructing a network. Based on the graph representation of images and the proposed ViG block, we can build the ViG network for visual tasks as shown in Figure~\ref{Fig:vig}. Compared to vanilla ResGCN~\cite{deepgcn}, our ViG can maintain the feature diversity (Figure~\ref{Fig:diversity}) as the layer goes deeper so as to learn discriminative representations.

\subsection{Network Architecture}
In the field of computer vision, the commonly-used transformer usually has an isotropic architecture (\eg, ViT~\cite{vit}), while CNNs prefer to use pyramid architecture (\ie, ResNet~\cite{resnet}). To have a extensive comparison with other types of neural networks, we build two kinds of network architectures for ViG, \ie, isotropic architecture and pyramid architecture.

\paragraph{Isotropic architecture.}
Isotropic architecture means the main body has features with equal size and shape throughout the network, such as ViT~\cite{vit} and ResMLP~\cite{resmlp}. We build three versions of isotropic ViG architecture with different models sizes, \ie, ViG-Ti, S and B. The number of nodes is set as $N=196$. To enlarge the receptive field gradually, the number of neighbor nodes $K$ increases from 9 to 18 linearly as the layer goes deep in these three models. The number of heads is set as $h=4$ by default. The details are listed in Table~\ref{tab:vig}.

\begin{table}[htp]
	\vspace{-0.em}
	\small 
	\centering
	\caption{Variants of our isotropic ViG architecture. The FLOPs are calculated for the image with 224$\times$224 resolution. `Ti' denotes tiny, `S' denotes small, and `B' denotes base. }\label{tab:vig}
	\renewcommand{\arraystretch}{1.05}
	\setlength{\tabcolsep}{8pt}{
		\begin{tabular}{l|c|c|c|c}
			\toprule[1.5pt]
			{Model}  & {Depth} & Dimension $D$ & Params (M) & FLOPs (B) \\
			\midrule
			ViG-Ti & 12 & 192 & 7.1 & 1.3 \\
			ViG-S & 16 & 320 & 22.7 & 4.5 \\
			ViG-B & 16 & 640 & 86.8 & 17.7 \\
			\bottomrule[1pt]
		\end{tabular}
	}
	\vspace{-1.0em}
\end{table}

\paragraph{Pyramid architecture.}
Pyramid architecture considers the multi-scale property of images by extracting features with gradually smaller spatial size as the layer goes deeper, such as ResNet~\cite{resnet} and PVT~\cite{pvt}. Empirical evidences show that pyramid architecture is effective for visual tasks~\cite{pvt}. Thus, we utilize the advanced design and build four versions of pyramid ViG models. The details are shown in Table~\ref{tab:pvig}. Note that we utilize the spatial reduction~\cite{pvt} in the first two stages to handle large number of nodes.

\begin{table*}[htp]
	\vspace{-0.5em}
	\small
	\centering
	\caption{Detailed settings of Pyramid ViG series. $D$: feature dimension, $E$: hidden dimension ratio in FFN, $K$: number of neighbors in GCN, $H\times W$: input image size. `Ti' denotes tiny, `S' denotes small, `M' denotes medium, and `B' denotes base. 
	}
	\renewcommand\arraystretch{1.1}
	\setlength{\tabcolsep}{6pt}
	\begin{tabular}{c|c|c|c|c|c}
	\Xhline{1.2pt}
	Stage & Output size & {PyramidViG-Ti} & {PyramidViG-S} & {PyramidViG-M} & {PyramidViG-B}
	\\
	\hline
	Stem & $\frac{H}{4}\times\frac{W}{4}$ & {Conv$\times3$} & {Conv$\times3$} & Conv$\times3$ & {Conv$\times3$}
	\\
	\hline
	{Stage 1} & {$\frac{H}{4}\times\frac{W}{4}$} 
	& $\begin{bmatrix}D=48\\E=4\\K=9\end{bmatrix}$$\times$2 
	& $\begin{bmatrix}D=80\\E=4\\K=9\end{bmatrix}$$\times$2 
	& $\begin{bmatrix}D=96\\E=4\\K=9\end{bmatrix}$$\times$2 
	& $\begin{bmatrix}D=128\\E=4\\K=9\end{bmatrix}$$\times$2 
	\\
	\hline
	Downsample & $\frac{H}{8}\times\frac{W}{8}$ & Conv & Conv & Conv & Conv
	\\
	\hline
	{Stage 2} & {$\frac{H}{8}\times\frac{W}{8}$} 
	& $\begin{bmatrix}D=96\\E=4\\K=9\end{bmatrix}$$\times$2 
	& $\begin{bmatrix}D=160\\E=4\\K=9\end{bmatrix}$$\times$2 
	& $\begin{bmatrix}D=192\\E=4\\K=9\end{bmatrix}$$\times$2 
	& $\begin{bmatrix}D=256\\E=4\\K=9\end{bmatrix}$$\times$2 
	\\
	\hline
	Downsample & $\frac{H}{16}\times\frac{W}{16}$ & Conv & Conv & Conv & Conv
	\\
	\hline
	{Stage 3} & {$\frac{H}{16}\times\frac{W}{16}$} 
	& $\begin{bmatrix}D=240\\E=4\\K=9\end{bmatrix}$$\times$6 
	& $\begin{bmatrix}D=400\\E=4\\K=9\end{bmatrix}$$\times$6 
	& $\begin{bmatrix}D=384\\E=4\\K=9\end{bmatrix}$$\times$16 
	& $\begin{bmatrix}D=512\\E=4\\K=9\end{bmatrix}$$\times$18 
	\\
	\hline
	Downsample & $\frac{H}{32}\times\frac{W}{32}$ & Conv & Conv & Conv & Conv
	\\
	\hline
	{Stage 4} & {$\frac{H}{32}\times\frac{W}{32}$} 
	& $\begin{bmatrix}D=384\\E=4\\K=9\end{bmatrix}$$\times$2 
	& $\begin{bmatrix}D=640\\E=4\\K=9\end{bmatrix}$$\times$2 
	& $\begin{bmatrix}D=768\\E=4\\K=9\end{bmatrix}$$\times$2 
	& $\begin{bmatrix}D=1024\\E=4\\K=9\end{bmatrix}$$\times$2 
	\\
	\hline
	Head & $1\times1$ & {Pooling \& MLP} & {Pooling \& MLP} & {Pooling \& MLP} & {Pooling \& MLP}\\
	\hline
	\multicolumn{2}{c|}{Parameters (M)} & {10.7} & {27.3} & {51.7} & {92.6}
	\\
	\hline
	\multicolumn{2}{c|}{FLOPs (B)} & {1.7} & {4.6} & {8.9} & {16.8}
	\\
	\Xhline{1.2pt}
\end{tabular}
	\label{tab:pvig}
	\vspace{-1.0em}
\end{table*}

\paragraph{Positional encoding.}
In order to represent the position information of the nodes, we add a positional encoding vector to each node feature:
\begin{equation}\label{eq:abs-pos}
	\mathbf{x}_i\leftarrow \mathbf{x}_i+\mathbf{e}_i,
\end{equation}
where $\mathbf{e}_i\in\mathbb{R}^{D}$. The absolute positional encoding as described in Eq.~\ref{eq:abs-pos} is applied in both iostropic and pyramid architectures. For pyramid ViG, we further include relative positional encoding by following the advanced designs like Swin Transformer~\cite{swin}. For node $i$ and $j$, the relative positional distance between them is $\mathbf{e}_i^T\mathbf{e}_j$, which will be added into the feature distance for constructing the graph.

\section{Experiments}
\label{sec:exp}
In this section, we conduct experiments to demonstrate the effectiveness of ViG models on visual tasks including image recognition and object detection.

\subsection{Datasets and Experimental Settings}
\label{sec:dataset}
\paragraph{Datasets.}
In image classification task, the widely-used benchmark ImageNet ILSVRC 2012~\cite{imagenet} is used in the following experiments. ImageNet has 1.2M training images and 50K validation images, which belong to 1000 categories. For the license of ImageNet dataset, please refer to \url{http://www.image-net.org/download}.
For object detection, we use COCO 2017~\cite{coco} dataset with 80 object categories. COCO 2017 contains 118K training images and 5K validation images. For the licenses of these datasets, please refer to \url{https://cocodataset.org/#home}.


\begin{wraptable}{r}{0.6\textwidth}
	\vspace{-1.5em}
	\centering
	\small
	\caption{Training hyper-parameters for ImageNet.}
	\label{table-hyper}
	\setlength{\tabcolsep}{8pt}
	\begin{tabular}{l|cccc}
		\Xhline{1.2pt}
		(Pyramid) ViG & Ti & S & M & B \\ 
		\hline
		Epochs & \multicolumn{4}{c}{300} \\
		Optimizer & \multicolumn{4}{c}{AdamW~\cite{adamw}} \\
		Batch size & \multicolumn{4}{c}{1024} \\
		Start learning rate (LR) & \multicolumn{4}{c}{2e-3} \\
		Learning rate schedule & \multicolumn{4}{c}{Cosine} \\
		Warmup epochs & \multicolumn{4}{c}{20} \\
		Weight decay & \multicolumn{4}{c}{0.05} \\
		Label smoothing~\cite{label-smooth} & \multicolumn{4}{c}{0.1} \\
		Stochastic path~\cite{huang2016deep} & 0.1 & 0.1 & 0.1 & 0.3 \\
		Repeated augment~\cite{hoffer2020augment} & \multicolumn{4}{c}{$\checkmark$} \\
		RandAugment~\cite{randaugment} & \multicolumn{4}{c}{$\checkmark$} \\
		Mixup prob.~\cite{mixup} & \multicolumn{4}{c}{0.8} \\
		Cutmix prob.~\cite{cutmix} & \multicolumn{4}{c}{1.0} \\
		Random erasing prob.~\cite{erasing} & \multicolumn{4}{c}{0.25} \\
		Exponential moving average & \multicolumn{4}{c}{0.99996} \\
		\Xhline{1.2pt}
	\end{tabular}
	\vspace{-0.5em}
\end{wraptable}

\paragraph{Experimental Settings.}
For all the ViG models, we utilize dilated aggregation~\cite{deepgcn} in Grapher module and set the dilated rate as $\lceil l/4\rceil$ for the $l$-th layer. GELU~\cite{gelu} is used as the nonlinear activation function in Eq.~\ref{eq:gcn} and~\ref{eq:ffn}.
For ImageNet classification, we use the commonly-used training strategy proposed in DeiT~\cite{deit} for fair comparison. The data augmentation includes RandAugment~\cite{randaugment}, Mixup~\cite{mixup}, Cutmix~\cite{cutmix}, random erasing~\cite{erasing} and repeated augment~\cite{hoffer2020augment}. The details are shown in Table~\ref{table-hyper}. For COCO detection task, we take RetinaNet~\cite{retinanet} and Mask R-CNN~\cite{maskrcnn} as the detection frameworks and use our Pyramid ViG as backbone. All the models are trained on COCO 2017 training set in ``1$\times$'' schedule and evaluated on validation set. We implement the networks using PyTroch~\cite{pytorch} and MindSpore~\cite{mindspore} and train all our models on 8 NVIDIA V100 GPUs.

\begin{table}[htp]
	\vspace{-0.5em}
	\small 
	\centering
	\caption{Results of ViG and other isotropic networks on ImageNet. {\color{blue}$\spadesuit$} CNN, {\color{orange}$\blacksquare$} MLP, {\color{green}$\blacklozenge$} Transformer, {\color{red}$\bigstar$} GNN.}\label{tab:iso}
	\vspace{-0.em}
	\renewcommand{\arraystretch}{1.0}
	\setlength{\tabcolsep}{10pt}{
		\begin{tabular}{l|c|c|c|c|c}
			\toprule[1.5pt]
			Model    &  Resolution  & Params (M) & FLOPs (B) & Top-1 & Top-5 \\
			\midrule
			{\color{blue}$\spadesuit$} ResMLP-S12 conv3x3~\cite{resmlp} & 	224$\times$224 & 16.7 & 3.2 & 77.0 & -\\
			{\color{blue}$\spadesuit$} ConvMixer-768/32~\cite{convmixer} & 224$\times$224 & 21.1 & 20.9 & 80.2 & - \\
			{\color{blue}$\spadesuit$} ConvMixer-1536/20~\cite{convmixer} & 224$\times$224 & 51.6 & 51.4 & 81.4 & - \\
			\midrule
			{\color{green}$\blacklozenge$} ViT-B/16~\cite{vit}  & 384$\times$384 & 86.4 & 55.5 &  77.9 & - \\
			{\color{green}$\blacklozenge$} DeiT-Ti~\cite{deit}  & 224$\times$224 & 5.7 & 1.3 & 72.2 & 91.1 \\			
			{\color{green}$\blacklozenge$} DeiT-S~\cite{deit}  & 224$\times$224 & 22.1 & 4.6 & 79.8 & 95.0 \\
			{\color{green}$\blacklozenge$} DeiT-B~\cite{deit}   & 224$\times$224 & 86.4 & 17.6 & 81.8 & 95.7 \\
			\midrule			
			{\color{orange}$\blacksquare$} ResMLP-S24~\cite{resmlp} & 224$\times$224 & 30 & 6.0 & 79.4 & 94.5 \\
			{\color{orange}$\blacksquare$} ResMLP-B24~\cite{resmlp}   & 224$\times$224 & 116 & 23.0 & 81.0 & 95.0 \\
			{\color{orange}$\blacksquare$} Mixer-B/16~\cite{mixer}   & 224$\times$224 & 59 & 11.7 & 76.4 & - \\
			\midrule
			{\color{red}$\bigstar$} ViG-Ti (ours) & 224$\times$224 & 7.1 & 1.3 & \textbf{73.9} & \textbf{92.0} \\			
			{\color{red}$\bigstar$} ViG-S (ours) & 224$\times$224 & 22.7 & 4.5 & \textbf{80.4} & \textbf{95.2} \\
			{\color{red}$\bigstar$} ViG-B (ours) & 224$\times$224 & 86.8 & 17.7 & \textbf{82.3} & \textbf{95.9} \\
			\bottomrule[1pt]
		\end{tabular}
	}
	\vspace{-0.em}
\end{table}

\subsection{Main Results on ImageNet}

\paragraph{Isotropic ViG}
The neural network with iostropic architecture keeps the feature size unchanged in its main computational body, which is easy to scale and is friendly for hardware acceleration. This scheme is widely used in transformer models for natural language processing~\cite{Att}. The recent neural networks in vision also explore it such as ConvMixer~\cite{mixer}, ViT~\cite{vit} and ResMLP~\cite{resmlp}. We compare our isotropic ViG with the existing iostropic CNNs~\cite{resmlp,mixer}, transformers~\cite{vit,deit} and MLPs~\cite{resmlp,mixer} in Table~\ref{tab:iso}. From the results, ViG performs better than other types of networks. For example, our ViG-Ti achieves 73.9\% top-1 accuracy which is 1.7\% higher than DeiT-Ti model with similar computational cost.

\paragraph{Pyramid ViG}
The pyramid architecture gradually shrinks the spatial size of feature maps as the network deepens, which can leverage the scale-invariant property of images and produce multi-scale features. The advanced networks usually adopt the pyramid architecture, such as ResNet~\cite{resnet}, Swin Transformer~\cite{swin} and CycleMLP~\cite{cyclemlp}. We compare our Pyramid ViG with those representative pyramid networks in Table~\ref{tab:pvig-sota}. Our Pyramid ViG series can outperform or be comparable to the state-of-the-art pyramid networks including CNN, MLP and transformer. This indicates that graph neural network can work well on visual tasks and has the potential to be a basic component in computer vision system.

\begin{table}[htp]
	\vspace{-0.em}
	\small 
	\centering
	\caption{Results of Pyramid ViG and other pyramid networks on ImageNet. {\color{blue}$\spadesuit$} CNN, {\color{orange}$\blacksquare$} MLP, {\color{green}$\blacklozenge$} Transformer, {\color{red}$\bigstar$} GNN.}\label{tab:pvig-sota}
	\vspace{-0.em}
	\renewcommand{\arraystretch}{1.0}
	\setlength{\tabcolsep}{10.5pt}{
		\begin{tabular}{l|c|c|c|c|c}
			\toprule[1.5pt]
			Model    &  Resolution  & Params (M) & FLOPs (B) & Top-1 & Top-5 \\
			\midrule
			{\color{blue}$\spadesuit$} ResNet-18~\cite{resnet,wightman2021resnet} & 224$\times$224 & 12 & 1.8 & 70.6 & 89.7 \\
			{\color{blue}$\spadesuit$} ResNet-50~\cite{resnet,wightman2021resnet} & 224$\times$224 & 25.6 & 4.1 & 79.8 & 95.0 \\	
			{\color{blue}$\spadesuit$} ResNet-152~\cite{resnet,wightman2021resnet} & 224$\times$224 & 60.2 & 11.5 & 81.8 & 95.9 \\
			{\color{blue}$\spadesuit$} BoTNet-T3~\cite{botnet} & 224$\times$224 & 33.5 & 7.3 & 81.7 & - \\
			{\color{blue}$\spadesuit$} BoTNet-T3~\cite{botnet} & 224$\times$224 & 54.7 & 10.9 & 82.8 & - \\
			{\color{blue}$\spadesuit$} BoTNet-T3~\cite{botnet} & 256$\times$256 & 75.1 & 19.3 & 83.5 & - \\
			\midrule
			{\color{green}$\blacklozenge$} PVT-Tiny~\cite{pvt}  & 224$\times$224 & 13.2 & 1.9 & 75.1 & - \\			
			{\color{green}$\blacklozenge$} PVT-Small~\cite{pvt}  & 224$\times$224 & 24.5 & 3.8 & 79.8 & - \\
			{\color{green}$\blacklozenge$} PVT-Medium~\cite{pvt}   & 224$\times$224 & 44.2 & 6.7 & 81.2 & - \\
			{\color{green}$\blacklozenge$} PVT-Large~\cite{pvt}  & 224$\times$224 & 61.4 & 9.8 &  81.7 & - \\
			{\color{green}$\blacklozenge$} CvT-13~\cite{cvt}  & 224$\times$224 & 20 & 4.5 & 81.6 & - \\
			{\color{green}$\blacklozenge$} CvT-21~\cite{cvt}  & 224$\times$224 & 32 & 7.1 & 82.5 & - \\
			{\color{green}$\blacklozenge$} CvT-21~\cite{cvt}  & 384$\times$384 & 32 & 24.9 & 83.3 & - \\
			{\color{green}$\blacklozenge$} Swin-T~\cite{swin}  & 224$\times$224 & 29 & 4.5 & 81.3 & 95.5 \\
			{\color{green}$\blacklozenge$} Swin-S~\cite{swin}  & 224$\times$224 & 50 & 8.7 & 83.0 & 96.2 \\
			{\color{green}$\blacklozenge$} Swin-B~\cite{swin}  & 224$\times$224 & 88 & 15.4 & 83.5 & 96.5 \\
			\midrule
			{\color{orange}$\blacksquare$} CycleMLP-B2~\cite{cyclemlp} & 224$\times$224 & 27 & 3.9  & 81.6 & - \\
			{\color{orange}$\blacksquare$} CycleMLP-B3~\cite{cyclemlp} & 224$\times$224 & 38 & 6.9  &  82.4 & - \\
			{\color{orange}$\blacksquare$} CycleMLP-B4~\cite{cyclemlp} & 224$\times$224 & 52 & 10.1  &  83.0 & - \\
			{\color{orange}$\blacksquare$} Poolformer-S12~\cite{metaformer}  & 224$\times$224 & 12 & 2.0 & 77.2 & 93.5 \\
			{\color{orange}$\blacksquare$} Poolformer-S36~\cite{metaformer}  & 224$\times$224 & 31 & 5.2 & 81.4 & 95.5 \\
			{\color{orange}$\blacksquare$} Poolformer-M48~\cite{metaformer}  & 224$\times$224 & 73 & 11.9 & 82.5 & 96.0 \\
			\midrule
			{\color{red}$\bigstar$} Pyramid ViG-Ti (ours) & 224$\times$224 & 10.7 & 1.7 & \textbf{78.2} & \textbf{94.2} \\			
			{\color{red}$\bigstar$} Pyramid ViG-S (ours) & 224$\times$224 & 27.3 & 4.6 & \textbf{82.1} & \textbf{96.0} \\
			{\color{red}$\bigstar$} Pyramid ViG-M (ours) & 224$\times$224 & 51.7 & 8.9 & \textbf{83.1} & \textbf{96.4} \\
			{\color{red}$\bigstar$} Pyramid ViG-B (ours) & 224$\times$224 & 92.6 & 16.8 & \textbf{83.7} & \textbf{96.5} \\
			\bottomrule[1pt]
		\end{tabular}
	}
	\vspace{-0.5em}
\end{table}

\subsection{Ablation Study}
We conduct ablation study of the proposed method on ImageNet classification task and use the isotropic ViG-Ti as the base architecture.

\begin{table}[htp]
	\vspace{-0.em}
	\small 
	\centering
	\caption{ImageNet results of different types of graph convolution. The basic architecture is ViG-Ti.}\label{tab:type}
	\vspace{-0.5em}
	\renewcommand{\arraystretch}{1.0}
	\setlength{\tabcolsep}{10pt}{
		\begin{tabular}{l|c|c|c}
			\toprule[1.5pt]
			GraphConv    &  Params (M) & FLOPs (B) & Top-1 \\
			\midrule
			EdgeConv~\cite{edgeconv} & 7.2 & 2.4 & 74.3 \\
			GIN~\cite{xu2018powerful} & 7.0 & 1.3 & 72.8 \\
			GraphSAGE~\cite{sage} & 7.3 & 1.6 & 74.0 \\
			Max-Relative GraphConv~\cite{deepgcn} & 7.1 & 1.3 & 73.9 \\
			\bottomrule[1pt]
		\end{tabular}
	}
	\vspace{-0.5em}
\end{table}

\paragraph{Type of graph convolution.}
We test the representative variants of graph convolution, including EdgeConv~\cite{edgeconv}, GIN~\cite{xu2018powerful}, GraphSAGE~\cite{sage} and Max-Relative GraphConv~\cite{deepgcn}. From table~\ref{tab:type}, we can see that the top-1 accuracies of different graph convolutions are better than that of DeiT-Ti, indicating the flexibility of ViG architecture. Among them, Max-Relative achieves the best trade-off between FLOPs and accuracy. In rest of the experiments, we use Max-Relative GraphConv by default unless specially stated.

\paragraph{The effects of modules in ViG.}
To make graph neural network adaptive to visual task, we introduce FC layers in Grapher module and utilize FFN block for feature transformation. We evaluate the effects of these modules by ablation study. We change the feature dimension of the compared models to make their FLOPs similar, so as to have a fair comparison. From Table~\ref{tab:ablation}, we can see that directly utilizing graph convolution for image classification performs poorly. Adding more feature transformation by introducing FC and FFN consistently increase the accuracy.

\begin{table}[htp]
	\vspace{-0.em}
	\small 
	\centering
	\caption{The effects of modules in ViG on ImageNet.}\label{tab:ablation}
	\vspace{-0.5em}
	\renewcommand{\arraystretch}{1.0}
	\setlength{\tabcolsep}{10pt}{
		\begin{tabular}{c|c|c|c|c|c}
			\toprule[1.5pt]
			GraphConv &  FC in Grapher module & FFN module  &  Params (M) & FLOPs (B) & Top-1 \\
			\midrule
			\Checkmark & \XSolidBrush & \XSolidBrush & 5.8 & 1.4 & 67.0 \\
			\Checkmark & \Checkmark & \XSolidBrush & 4.4 & 1.4 & 73.4 \\
			\Checkmark & \XSolidBrush & \Checkmark & 7.7 & 1.3 & 73.6 \\
			\Checkmark & \Checkmark & \Checkmark & 7.1 & 1.3 & 73.9 \\
			\bottomrule[1pt]
		\end{tabular}
	}
	\vspace{-1.em}
\end{table}

\paragraph{The number of neighbors.}
In the process of constructing graph, the number of neighbor nodes $K$ is a hyperparameter controlling the aggregated range. Too few neighbors will degrade information exchange, while too many neighbors will lead to over-smoothing. We tune $K$ from 3 to 20 and show the results in Table~\ref{tab:k}. We can see that the number of neighbor nodes in the range from 9 to 15 can perform well on ImageNet classification task.

\begin{table}[htp]
	\vspace{-0.5em}
	\small 
	\centering
	\caption{Top-1 accuracy \emph{vs.} $K$ on ImageNet.}\label{tab:k}
	\vspace{-0.5em}
	\renewcommand{\arraystretch}{1.0}
	\setlength{\tabcolsep}{10pt}{
		\begin{tabular}{l|c|c|c|c|c|c|c}
			\toprule[1.5pt]
			$K$  &  3 & 6 & 9 & 12 & 15 & 20 & 9 to 18\\
			\midrule
			Top-1 & 72.2 & 73.4 & 73.6 & 73.6 & 73.5 & 73.3 & 73.9 \\
			\bottomrule[1pt]
		\end{tabular}
	}
	\vspace{-0.5em}
\end{table}

\paragraph{The number of heads.}
Multi-head update operation allows Grapher module to process node features in different subspaces. The number of heads $h$ in Eq.~\ref{eq:multihead} controls the transformation diversity in subspaces and the FLOPs. We tune $h$ from 1 to 8 and show the results in Table~\ref{tab:head}. The FLOPs and top-1 accuracy on ImageNet changes slightly for different $h$. We select $h=4$ as default value for the optimal trade-off between FLOPs and accuracy.

\begin{table}[H]
	\vspace{-0.5em}
	\small 
	\centering
	\caption{Top-1 accuracy \emph{vs.} $h$ on ImageNet.}\label{tab:head}
	\vspace{-0.5em}
	\renewcommand{\arraystretch}{1.0}
	\setlength{\tabcolsep}{10pt}{
		\begin{tabular}{l|c|c|c|c|c}
			\toprule[1.5pt]
			$h$  &  1 & 2 & 4 & 6 & 8 \\
			\midrule
			FLOPs / Top-1 & 1.6B / 74.2 & 1.4B / 74.0 & 1.3B / 73.9 & 1.2B / 73.7 & 1.2B / 73.7 \\
			\bottomrule[1pt]
		\end{tabular}
	}
	\vspace{-0.5em}
\end{table}

\subsection{Object Detection}
We apply our ViG model on object detection task to evaluate its generalization. To have a fair comparison, we utilize the ImageNet pretrained Pyramid ViG-S as the backbone of RetinaNet~\cite{retinanet} and Mask R-CNN~\cite{maskrcnn} detection frameworks. The models are trained in the commonly-used ``1x'' schedule and FLOPs is calculated with 1280$\times$800 input size. From the results in Table~\ref{table:coco-1x}, we can see that our Pyramid ViG-S performs better than the representative backbones of different types, including ResNet~\cite{resnet}, CycleMLP~\cite{cyclemlp} and Swin Transformer~\cite{swin} on both RetinaNet and Mask R-CNN. The superior results demonstrate the generalization ability of ViG architecture.

\begin{table}[H]
	\small
	\centering
	\renewcommand{\arraystretch}{1.0}
	\setlength\tabcolsep{7.5pt}
	\caption{Object detection and instance segmentation results on COCO val2017. Our Pyramid ViG is compared with other backbones on RetinaNet and Mask R-CNN frameworks.}
	\vspace{-0.5em}
	\label{table:coco-1x}
	\begin{tabular}{l|cc|ccc|ccc}
		\toprule[1.5pt]
		\multirow{2}{*}{Backbone} & \multicolumn{8}{c}{RetinaNet 1$\times$}  \\
		\cline{2-9}
		& Param & FLOPs & mAP & AP$_{50}$ &AP$_{75}$  &AP$_{\rm S}$ &AP$_{\rm M}$ & AP$_{\rm L}$\\
		\midrule		
		ResNet50~\cite{resnet} & 37.7M & 239.3B & 36.3 & 55.3 & 38.6 & 19.3 & 40.0 & 48.8 \\
		ResNeXt-101-32x4d~\cite{resnext} & 56.4M & 319B & 39.9 & 59.6 & 42.7 & 22.3 & 44.2 & 52.5 \\
		PVT-Small~\cite{pvt} & 34.2M & 226.5B & 40.4 & 61.3 & 44.2 & 25.0 & 42.9 & 55.7 \\
		CycleMLP-B2~\cite{cyclemlp} & 36.5M & 230.9B & 40.6 & 61.4 & 43.2 & 22.9 & 44.4 & 54.5 \\
		Swin-T~\cite{swin} & 38.5M & 244.8B & 41.5 & 62.1 & 44.2 & 25.1 & 44.9 & \textbf{55.5} \\
		Pyramid ViG-S (ours) & 36.2M & 240.0B & \textbf{41.8} & \textbf{63.1} & \textbf{44.7} & \textbf{28.5} & \textbf{45.4} & 53.4 \\
		\midrule
		\multirow{2}{*}{Backbone}  &\multicolumn{8}{c}{Mask R-CNN 1$\times$} \\
		\cline{2-9}
		& Param & FLOPs & AP$^{\rm b}$ & AP$_{50}^{\rm b}$ &AP$_{75}^{\rm b}$  &AP$^{\rm m}$ &AP$_{50}^{\rm m}$ & AP$_{75}^{\rm m}$\\
		\midrule		
		ResNet50~\cite{resnet} & 44.2M & 260.1B & 38.0 & 58.6 & 41.4 & 34.4 & 55.1 & 36.7 \\
		PVT-Small~\cite{pvt} & 44.1M & 245.1B & 40.4 & 62.9 & 43.8 & 37.8 & 60.1 & 40.3 \\
		CycleMLP-B2~\cite{cyclemlp} & 46.5M & 249.5B & 42.1 & 64.0 & 45.7 & 38.9 & 61.2 & 41.8 \\
		PoolFormer-S24~\cite{metaformer} & 41.0M & - & 40.1 & 62.2 & 43.4 & 37.0 & 59.1 & 39.6 \\
		Swin-T~\cite{swin} & 47.8M & 264.0B & 42.2 & 64.6 & \textbf{46.2} & 39.1 & 61.6 & \textbf{42.0} \\
		Pyramid ViG-S (ours) & 45.8M & 258.8B & \textbf{42.6} & \textbf{65.2} & {46.0} & \textbf{39.4} & \textbf{62.4} & {41.6} \\
		\bottomrule[1pt]
	\end{tabular}
	\vspace{-0.5em}
\end{table}

\subsection{Visualization}
To better understand how our ViG model works, we visualize the constructed graph structure in ViG-S. In Figure~\ref{Fig:vis}, we show the graphs of two samples in different depths (the 1st and the 12th blocks). The pentagram is the center node, and the nodes with the same color are its neighbors. Two center nodes are visualized as drawing all the edges will be messy. We can observe that our model can select the content-related nodes as the first order neighbors. In the shallow layer, the neighbor nodes tend to be selected based on low-level and local features, such as color and texture. In the deep layer, the neighbors of the center nodes are more semantic and belong to the same category. Our ViG network can gradually link the nodes by its content and semantic representation and help to better recognize the objects.

\begin{figure}[H]
	\vspace{-0.5em}
	\centering
	\small
	\setlength{\tabcolsep}{6pt}{
		\begin{tabular}{ccc}
			\makecell*[c]{\includegraphics[width=0.2\linewidth]{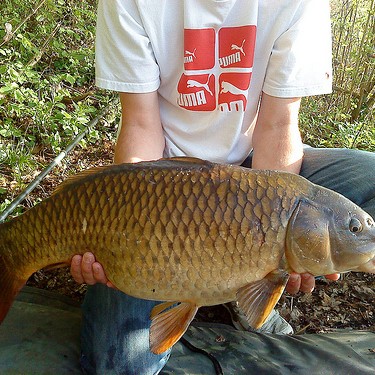}}  &
			\makecell*[c]{\includegraphics[width=0.33\linewidth]{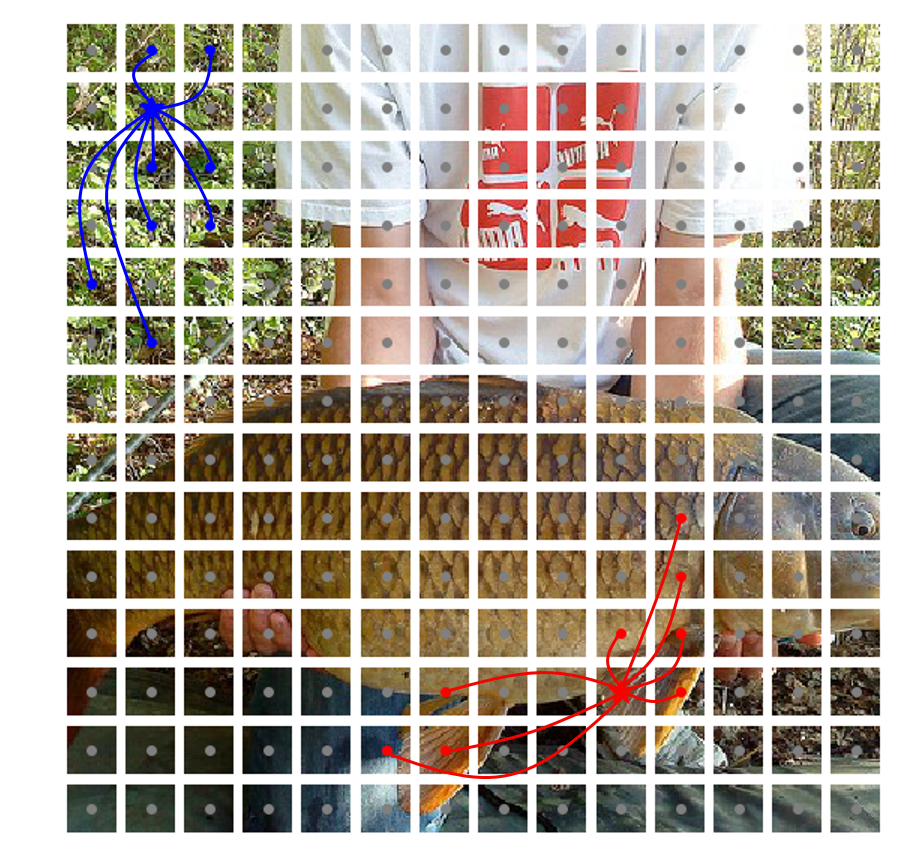}}  & \makecell*[c]{\includegraphics[width=0.33\linewidth]{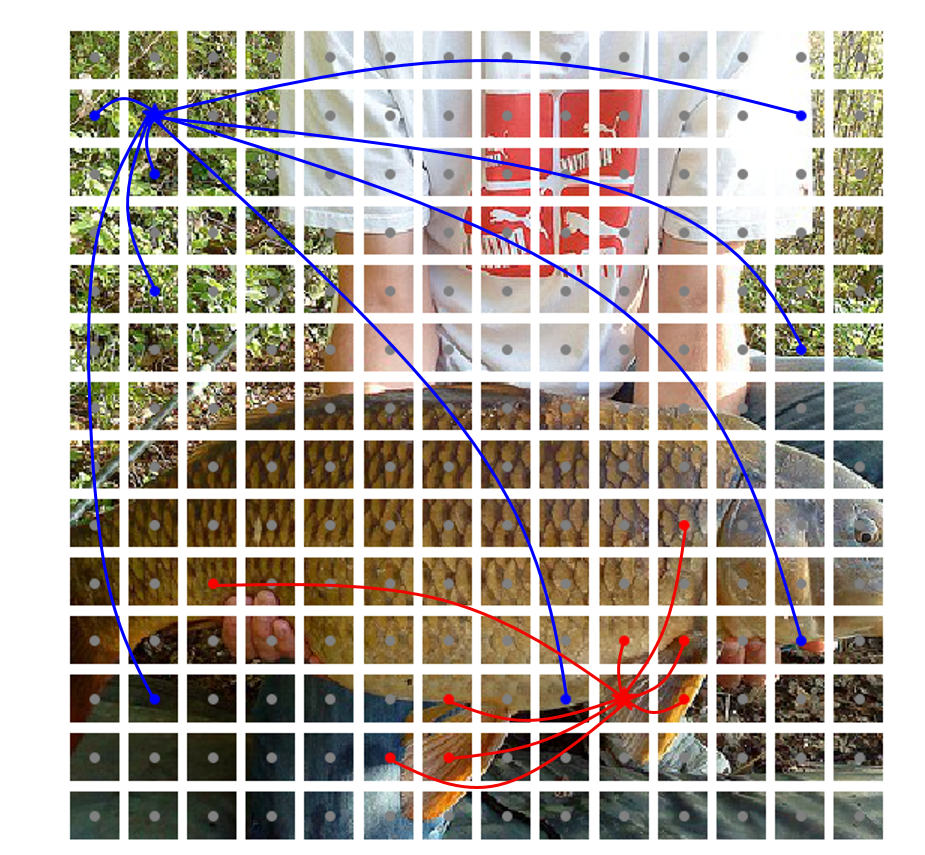}}
			\\
			\makecell*[c]{\includegraphics[width=0.2\linewidth]{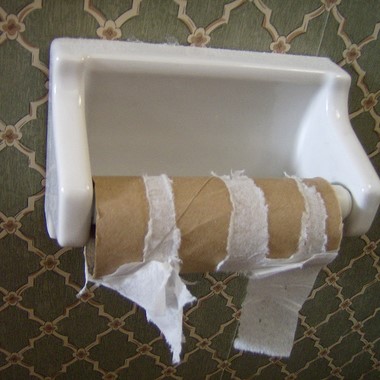}}  &
			\makecell*[c]{\includegraphics[width=0.33\linewidth]{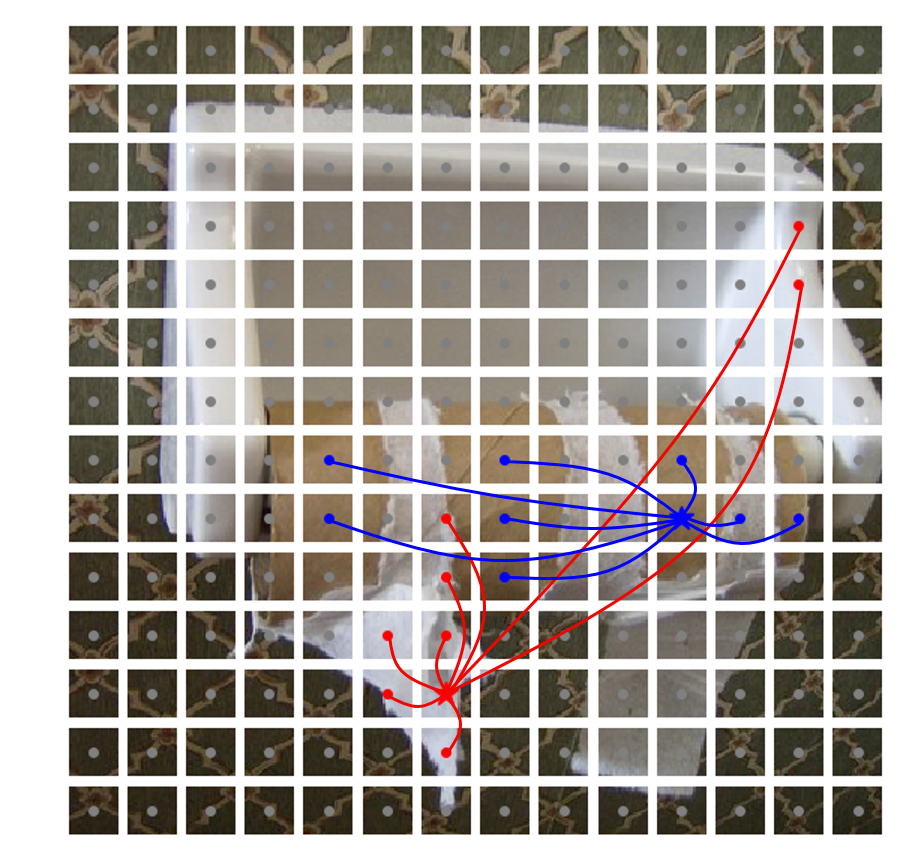}}  & \makecell*[c]{\includegraphics[width=0.33\linewidth]{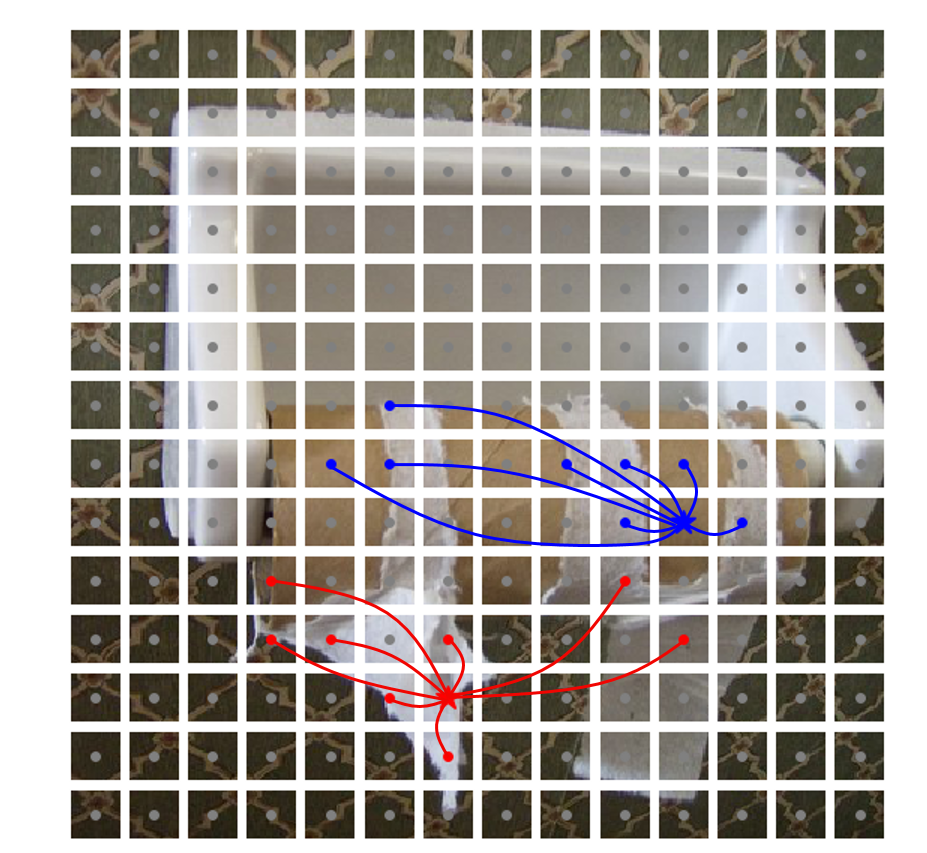}}
			\\
			(a) Input image. & (b) Graph connection in the 1st block. & (c) Graph connection in the 12th block.
		\end{tabular}
	}
	\vspace{-0.5em}
	\caption{Visualization of the constructed graph structure. The pentagram is the center node, and the nodes with the same color are its neighbors in the graph.}
	\label{Fig:vis}
	\vspace{-0em}
\end{figure}

\section{Conclusion}
In this work, we pioneer to study representing the image as graph data and leverage graph neural network for visual tasks. We divide the image into a number of patches and view them as nodes. Constructing graph based on these nodes can better represent the irregular and complex objects in the wild. Directly using graph convolution on the image graph structure has over-smoothing problem and performs poorly. We introduce more feature transformation inside each node to encourage the information diversity. Based on the graph representation of images and improved graph block, we build our vision GNN (ViG) networks with both isotropic and pyramid architectures. Extensive experiments on image recognition and object detection demonstrate the superiority of the proposed ViG architecture. We hope this pioneering work on vision GNN can serve as a basic architecture for general visual tasks.

\section*{Acknowledgement}
This research is supported by NSFC (62072449, 61872345), National Key R\&D Program of China (2021YFB1715800), and Macau Science \&Tech. Fund (0018/2019/AKP). We gratefully acknowledge the support of MindSpore, CANN (Compute Architecture for Neural Networks) and Ascend AI Processor used for this research.

\nocite{xu2019positive,xu2021learning}

{\small
	\bibliographystyle{plain}
	\bibliography{ref}
}

\section*{Checklist}


\begin{enumerate}

	\item For all authors...
	\begin{enumerate}
		\item Do the main claims made in the abstract and introduction accurately reflect the paper's contributions and scope?
		\answerYes{See Section 1.}
		\item Did you describe the limitations of your work?
		\answerYes{}
		\item Did you discuss any potential negative societal impacts of your work?
		\answerNo{No potential negative societal impacts.}
		\item Have you read the ethics review guidelines and ensured that your paper conforms to them?
		\answerYes{}
	\end{enumerate}

	\item If you are including theoretical results...
	\begin{enumerate}
		\item Did you state the full set of assumptions of all theoretical results?
		\answerNA{}
		\item Did you include complete proofs of all theoretical results?
		\answerNA{}
	\end{enumerate}

	\item If you ran experiments...
	\begin{enumerate}
		\item Did you include the code, data, and instructions needed to reproduce the main experimental results (either in the supplemental material or as a URL)?
		\answerYes{See Section~\ref{sec:dataset}.}
		\item Did you specify all the training details (e.g., data splits, hyperparameters, how they were chosen)?
		\answerYes{See Section~\ref{sec:dataset}.}
		\item Did you report error bars (e.g., with respect to the random seed after running experiments multiple times)?
		\answerNo{The common settings on ImageNet and COCO datasets.}
		\item Did you include the total amount of compute and the type of resources used (e.g., type of GPUs, internal cluster, or cloud provider)?
		\answerYes{See Section~\ref{sec:dataset}.}
	\end{enumerate}

	\item If you are using existing assets (e.g., code, data, models) or curating/releasing new assets...
	\begin{enumerate}
		\item If your work uses existing assets, did you cite the creators?
		\answerYes{See Section~\ref{sec:dataset}.}
		\item Did you mention the license of the assets?
		\answerYes{See Section~\ref{sec:dataset}.}
		\item Did you include any new assets either in the supplemental material or as a URL?
		\answerNo{}
		\item Did you discuss whether and how consent was obtained from people whose data you're using/curating?
		\answerNA{}
		\item Did you discuss whether the data you are using/curating contains personally identifiable information or offensive content?
		\answerNA{}
	\end{enumerate}

	\item If you used crowdsourcing or conducted research with human subjects...
	\begin{enumerate}
		\item Did you include the full text of instructions given to participants and screenshots, if applicable?
		\answerNA{}
		\item Did you describe any potential participant risks, with links to Institutional Review Board (IRB) approvals, if applicable?
		\answerNA{}
		\item Did you include the estimated hourly wage paid to participants and the total amount spent on participant compensation?
		\answerNA{}
	\end{enumerate}

\end{enumerate}

\appendix

\section{Appendix}

\subsection{Theoretical Analysis}
In our ViG block, we propose to increase feature diversity in nodes by utilizing more feature transformation such as FFN module. We show the empirical comparison between vanilla ResGCN and our ViG model in our paper. Here we make a simple theoretical analysis of the benefit of FFN module in ViG on increasing the feature diversity. Given the output features of graph convolution $X\in\mathbb{R}^{N\times D}$, the feature diversity~\cite{dong2021attention} is measured as
\begin{equation}
	\gamma(X) = \|X-\mathbf{1}\mathbf{x}^T\|, \quad\text{where}\quad \mathbf{x} = \arg\min_{\mathbf{x}}\|X-\mathbf{1}\mathbf{x}^T\|,
\end{equation}
where $\|\cdot\|$ is the $\ell_{1,\infty}$ norm of a matrix. By applying FFN module on the features, we have the following theorem.
\begin{myTheo}\label{theorem1}
	Given a FFN module, the diversity $\gamma(\text{FFN}(X))$ of its output features satisfies
	\begin{equation}
		\gamma(\text{FFN}(X))\leq \lambda\gamma(X),
	\end{equation}
	where $\lambda$ is the Lipschitz constant of FFN with respect to $p$-norm for $p\in[1,\infty]$.
\end{myTheo}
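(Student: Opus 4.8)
The plan is to exploit the fact that the FFN module acts on each node (row) independently, which makes the subspace $\mathcal{M} = \{\mathbf{1}\mathbf{x}^T : \mathbf{x}\in\mathbb{R}^D\}$ of ``consensus'' matrices (those with all rows identical) invariant under FFN. Writing $\text{FFN}(X) = \sigma(XW_1)W_2 + X$, I would first record that $\mathbf{1}\mathbf{x}^T$ has every row equal to $\mathbf{x}^T$; since right-multiplication by $W_1$, entrywise application of $\sigma$, right-multiplication by $W_2$, and the residual addition each preserve the property that all rows coincide, it follows that $\text{FFN}(\mathbf{1}\mathbf{x}^T) = \mathbf{1}\mathbf{z}^T$ for some $\mathbf{z}\in\mathbb{R}^D$. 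In other words, FFN maps $\mathcal{M}$ into $\mathcal{M}$.

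Next I would set up a variational argument. Let $\mathbf{x}^\star = \arg\min_{\mathbf{x}}\|X-\mathbf{1}\mathbf{x}^T\|$, so that $\gamma(X) = \|X - \mathbf{1}(\mathbf{x}^\star)^T\|$. By the invariance just established, $\text{FFN}(\mathbf{1}(\mathbf{x}^\star)^T)$ again lies in $\mathcal{M}$, hence it is one admissible competitor in the minimization that defines $\gamma(\text{FFN}(X))$. Because that diversity is the infimum over all of $\mathcal{M}$, substituting this particular competitor can only increase the value, yielding
\begin{equation}
\gamma(\text{FFN}(X)) \;\le\; \bigl\|\text{FFN}(X) - \text{FFN}(\mathbf{1}(\mathbf{x}^\star)^T)\bigr\|.
\end{equation}

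Finally I would invoke the Lipschitz hypothesis. Since FFN is $\lambda$-Lipschitz with respect to the $p$-norm and acts row-wise, I expect the per-node bound to lift to the matrix norm $\|\cdot\|$ appearing in $\gamma$, so that the right-hand side above is at most $\lambda\|X - \mathbf{1}(\mathbf{x}^\star)^T\| = \lambda\gamma(X)$, which is exactly the claim. The main obstacle I anticipate is precisely this lifting step: one must verify that the $\ell_{1,\infty}$ matrix norm is compatible with a row-wise $\lambda$-Lipschitz map, i.e.\ that $\|\text{FFN}(A)-\text{FFN}(B)\|\le\lambda\|A-B\|$ holds for the chosen norm and every $p\in[1,\infty]$. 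For the $\ell_{1,\infty}$ norm this should reduce to bounding each row's $p$-norm difference by $\lambda$ and then aggregating across rows with the appropriate max/sum, using that FFN perturbs the rows by independent $\lambda$-Lipschitz maps; by comparison, the invariance of $\mathcal{M}$ and the variational inequality are routine.
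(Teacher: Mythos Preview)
Your proposal is correct and matches the paper's proof essentially step for step: the paper likewise argues that FFN preserves the ``constancy-across-rows'' property, then bounds $\gamma(\text{FFN}(X))$ by $\|\text{FFN}(X)-\text{FFN}(\mathbf{1}\mathbf{x}^T)\|$ via the variational definition, and finally applies the Lipschitz constant. The lifting concern you raise is legitimate but is simply assumed in the paper, which applies the Lipschitz bound directly at the matrix level without further comment.
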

\begin{proof}
	The FFN includes weight matrix multiplication, bias addition and elementwise nonlinear function, which all preserve the constancy-across-rows property of $\text{FFN}(\mathbf{1}\mathbf{x}^T)$. Therefore, we have
	\begin{align*}
		\gamma(\text{FFN}(X)) &= \|\text{FFN}(X)-\mathbf{1}\mathbf{x'}^T\|_{p}\\
		&\leq \|\text{FFN}(X)-\text{FFN}(\mathbf{1}\mathbf{x}^T)\|_{p} \quad\quad \triangleright \text{FFN preserves constancy-across-rows.}\\
		&\leq \lambda\|X-\mathbf{1}\mathbf{x}^T\|_{p} \quad\quad\quad\quad\quad\quad \triangleright \text{Definition of Lipschitz constant.}\\
		&= \lambda\gamma(X),
	\end{align*}
\end{proof}

The Lipschitz constant of FFN is related to the norm of weight matrices and is usually much larger than 1~\cite{virmaux2018lipschitz}. Thus, the Theorem~\ref{theorem1} shows that introducing $\gamma(\text{FFN}(X))$ in our ViG block tends to improve the feature diversity in graph neural network.

\subsection{Pseudocode}
The proposed Vision GNN framework is easy to be implemented based on the commonly-used layers without introducing complex operations. The pseudocode of the core part, \ie, ViG block is shown in Algorithm~\ref{alg:code}.
\begin{breakablealgorithm}
\caption{PyTorch-like Code of ViG Block}
\label{alg:code}
\definecolor{codeblue}{rgb}{0.25,0.5,0.5}
\definecolor{codekw}{rgb}{0.85, 0.18, 0.50}
\lstset{
	backgroundcolor=\color{white},
	basicstyle=\fontsize{8pt}{8pt}\ttfamily\selectfont,
	columns=fullflexible,
	breaklines=true,
	captionpos=b,
	commentstyle=\fontsize{8pt}{8pt}\color{codeblue},
	keywordstyle=\fontsize{8pt}{8pt}\color{codekw},
}
\begin{lstlisting}[language=python]
import torch.nn as nn
from gcn_lib.dense.torch_vertex import DynConv2d
# gcn_lib is downloaded from https://github.com/lightaime/deep_gcns_torch

class GrapherModule(nn.Module):
  """Grapher module with graph conv and FC layers
  """
  def __init__(self, in_channels, hidden_channels, k=9, dilation=1, drop_path=0.0):
    super(GrapherModule, self).__init__()
    self.fc1 = nn.Sequential(
      nn.Conv2d(in_channels, in_channels, 1, stride=1, padding=0),
      nn.BatchNorm2d(in_channels),
    )
    self.graph_conv = nn.Sequential(
      DynConv2d(in_channels, hidden_channels, k, dilation, act=None),
      nn.BatchNorm2d(hidden_channels),
      nn.GELU(),
    )
    self.fc2 = nn.Sequential(
      nn.Conv2d(hidden_channels, in_channels, 1, stride=1, padding=0),
      nn.BatchNorm2d(in_channels),
    )
    self.drop_path = DropPath(drop_path) if drop_path > 0. else nn.Identity()

  def forward(self, x):
    B, C, H, W = x.shape
    x = x.reshape(B, C, -1, 1).contiguous()
    shortcut = x
    x = self.fc1(x)
    x = self.graph_conv(x)
    x = self.fc2(x)
    x = self.drop_path(x) + shortcut
    return x.reshape(B, C, H, W)

class FFNModule(nn.Module):
  """Feed-forward Network
  """
  def __init__(self, in_channels, hidden_channels, drop_path=0.0):
    super(FFNModule, self).__init__()
    self.fc1 = nn.Sequential(
      nn.Conv2d(in_channels, hidden_channels, 1, stride=1, padding=0),
      nn.BatchNorm2d(hidden_channels),
      nn.GELU()
    )
    self.fc2 = nn.Sequential(
      nn.Conv2d(hidden_channels, in_channels, 1, stride=1, padding=0),
      nn.BatchNorm2d(in_channels),
    )
    self.drop_path = DropPath(drop_path) if drop_path > 0. else nn.Identity()

  def forward(self, x):
    shortcut = x
    x = self.fc1(x)
    x = self.fc2(x)
    x = self.drop_path(x) + shortcut
    return x

class ViGBlock(nn.Module):
  """ViG block with Grapher and FFN modules
  """
  def __init__(self, channels, k, dilation, drop_path=0.0):
    super(ViGBlock, self).__init__()
    self.grapher = GrapherModule(channels, channels * 2, k, dilation, drop_path)
    self.ffn = FFNModule(channels, channels * 4, drop_path)

  def forward(self, x):
    x = self.grapher(x)
    x = self.ffn(x)
    return x
\end{lstlisting}
\end{breakablealgorithm}

\end{document}